\pdfoutput=1
\documentclass[
    11pt,
    logo,
    onecolumn,
    copyright,
    colorlinks=true,
    allcolors=blue
]{nvidiatechreport}

\usepackage[round]{natbib}
\usepackage{mathtools}
\usepackage{amssymb}
\usepackage{bm}

\usepackage{algorithm}
\usepackage{algpseudocode}

\usepackage{multirow}
\usepackage{makecell}
\usepackage{booktabs}
\usepackage{array}
\usepackage{tabularx}
\usepackage{enumitem}

\usepackage{graphicx}
\usepackage{float}
\usepackage{flafter}
\usepackage{xcolor}
\usepackage{tikz}
\usetikzlibrary{arrows.meta,fit,positioning,calc}
\usepackage{wrapfig}
\usepackage{xspace}

\setlist{nosep,leftmargin=*}

\newcolumntype{L}{>{\raggedright\arraybackslash}X}
\newcolumntype{C}{>{\centering\arraybackslash}X}
\newcolumntype{R}{>{\raggedleft\arraybackslash}X}

\definecolor{activeblue}{HTML}{DCEBFA}
\definecolor{warmorange}{HTML}{FCE4C2}
\definecolor{cachegray}{HTML}{E7E8EA}
\definecolor{commitgreen}{HTML}{DDEEDB}

\newcommand{\OURMETHODtext}{OBA}
\newcommand{\OURMETHODlong}{One Block Ahead\xspace}
\newcommand{\OURMETHOD}{\OURMETHODtext\xspace}

\newcommand{\base}{\textsc{Native}}
\newcommand{\basec}[1]{\base{} ($C{=}#1$)\xspace}
\newcommand{\basesm}{\basec{256}}
\newcommand{\baselg}{\basec{512}}
\newcommand{\ocabidirectional}{\textsc{Dense Mask}\xspace}
\newcommand{\flowblock}{\textsc{FlowBlock}\xspace}
\newcommand{\flowblockw}{\textsc{FlowBlock} ($W{=}2$)\xspace}

\newcommand{\rettoken}{\textsc{Keep Tokens}\xspace}
\newcommand{\retsignal}{\textsc{Keep Signal}\xspace}
\newcommand{\retfull}{\textsc{Keep All}\xspace}
\newcommand{\retboth}{\textsc{Drop Time}\xspace}
\newcommand{\retnosignal}{\textsc{No Signal}\xspace}
\newcommand{\retnoaccept}{\textsc{No Accept}\xspace}

\newcommand{\cur}{\mathrm{c}}
\newcommand{\wu}{\mathrm{w}}

\newcommand{\enc}{\operatorname{Encoder}_{\theta}}
\newcommand{\dec}{\operatorname{Decoder}_{\theta}}
\newcommand{\softmax}{\operatorname{Softmax}}
\newcommand{\unif}{\operatorname{Unif}}
\newcommand{\step}{\operatorname{Step}}
\newcommand{\Vcal}{\mathcal{V}}

\newtheorem{proposition}{Proposition}

\title{One Block Ahead: Promoting Sampler State\\in Diffusion Language Models}


\author[1,$\dagger$]{\mbox{Yangyi Shen}}
\author[2]{\mbox{Pin-Jui Ku}}
\author[2]{\mbox{Nicolai Oswald}}
\author[2]{\mbox{Jean-Marie Lemercier}}
\author[2]{\mbox{Yonggan Fu}}
\author[1]{\mbox{Stefano Ermon}}
\author[1]{\mbox{Jure Leskovec}}
\author[2]{\mbox{Boris Ginsburg}}
\author[2]{\mbox{Ante Juki\'{c}}}
\affil[1]{Stanford University}
\affil[2]{NVIDIA}
\affil[$\dagger$]{Work done during internships at NVIDIA}
\date{}

\hypersetup{
  pdftitle={One Block Ahead: Promoting Sampler State in Diffusion Language Models},
  pdfauthor={Yangyi Shen, Pin-Jui Ku, Nicolai Oswald, Jean-Marie Lemercier, Yonggan Fu, Stefano Ermon, Jure Leskovec, Boris Ginsburg, Ante Juki\'{c}}
}

\begin{document}
\begin{abstract}
\large \textbf{Abstract.}
Block-diffusion language models denoise a whole block of token positions in parallel so each forward pass can emit many tokens. However, blocks are still denoised in strict sequence, a cost that within-block acceleration leaves untouched. Here, we investigate decoding blocks concurrently and propose \OURMETHODlong (\OURMETHODtext), a training-free decoding algorithm that denoises the next block alongside the current one. We demonstrate that for uniform-state diffusion LMs, where a block's denoising progress is not visible in its intermediate tokens, concurrent decoding is advantageous if the next block's tokens are promoted together with the corresponding sampler state, including the self-conditioning signal and the denoising time. We also derive a break-even criterion that relates the benefit of \OURMETHOD to the relative cost of a paired-block forward pass. On DiffusionGemma, a state-of-the-art diffusion LM, \OURMETHOD increases the tokens per forward pass by 42.3\% (from 15.64 to 22.25) and the single-request denoising throughput by 16.3\%, averaged over ten benchmarks, with a small accuracy regression.
\end{abstract}
\maketitle

\begin{figure}[t]
  \centering
  \includegraphics[width=1.0\linewidth]{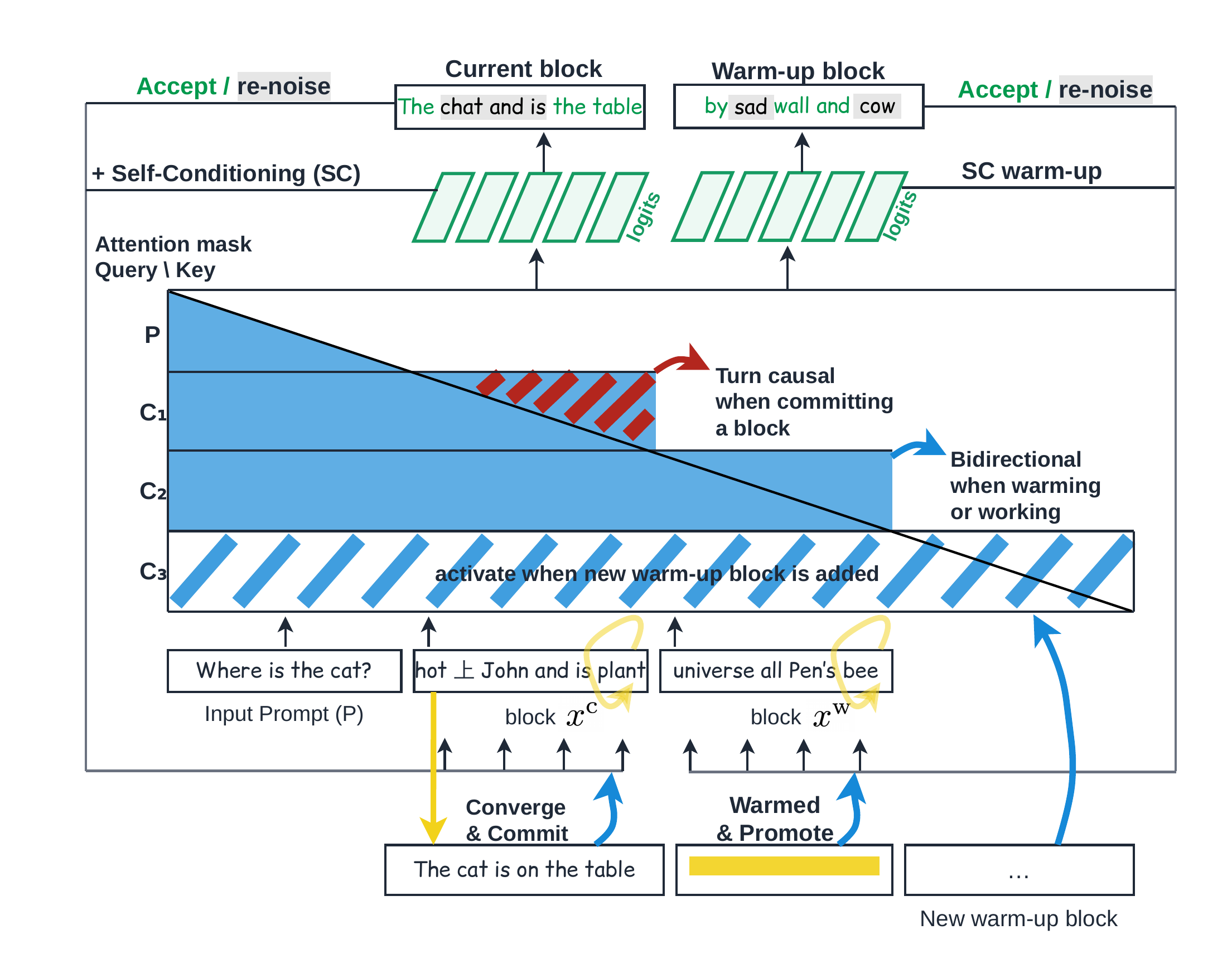}

  \vspace{-0.5cm}\caption{\textbf{\OURMETHODlong (\OURMETHOD).} The current block $x^\cur$ and the warm-up block $x^\wu$ are denoised in one forward pass over the input sequence $[x^\cur\mid x^\wu]$. A block-causal attention mask lets $x^\wu$ attend to $x^\cur$ but not the reverse, so $x^\cur$ is denoised as if it were alone. When $x^\cur$ is committed and encoded under causal attention, $x^\wu$ is promoted in its place, retaining the tokens and sampler state it has accumulated, and a new warm-up block follows.}
  \label{fig:pipeline}
\end{figure}

\section{Introduction}
\label{sec:introduction}

Diffusion language models generate text by iteratively denoising a whole sequence of token positions, i.e., a block, so a single forward pass can decode many positions at once \citep{MDLM,LLaDA,Dream,NLD}. That decoding parallelism is the principal advantage of diffusion over autoregressive decoding, generating one token at a time. However, there is typically a tradeoff with quality: the more positions a step accepts, the more it disregards the dependencies among them, so accepting tokens aggressively improves the latency at the cost of quality \citep{EntropyBoundedSampling,NLD}. How far a decoder can push that quality--throughput frontier is what determines whether diffusion is competitive with autoregression on a single request.

Among recent diffusion language models, DiffusionGemma offers a particularly strong quality--throughput trade-off, reporting around 20 tokens per forward pass (TPF) and roughly 1,500 output tokens per second (TPS) on a single H100 GPU while maintaining strong benchmark performance \citep{DiffusionGemma}.
Its efficiency comes from several design choices. A block-autoregressive architecture supports KV caching and variable-length generation \citep{BlockDiffusion};
uniform-state diffusion \citep{MultinomialDiffusion,UDLM} keeps every position revisable until its block is committed, allowing aggressive parallel updates without the irreversible early commitments of masked diffusion decoding;
and entropy-bounded sampling \citep{EntropyBoundedSampling}, adaptive stopping, and sampler distillation reduce the required denoising steps, with self-conditioning carrying predictions from one step to the next.
However, its decoding remains sequential across blocks, which leaves multi-block parallelism unexplored: can decoding successive blocks concurrently further accelerate a model that already achieves such high within-block efficiency?

Recent work has demonstrated the potential of multi-block decoding through distillation, post-training, and concurrent decoding schedules \citep{D2F,MultiBlockDiffusion,FlowBlock}.
However, whether these gains extend to DiffusionGemma's combination of native 256-token blocks, high decoding efficiency, and strong model quality remains unestablished.
For example, Multi-Block Diffusion~\citep{MultiBlockDiffusion} and FlowBlock~\citep{FlowBlock} report their main results with 32-token blocks and baselines averaging 3.47 and 4.62 TPF, respectively.
Extending concurrency to DiffusionGemma poses both an algorithmic and a hardware utilization challenge.
Its self-conditioning signal and denoising time carry progress that must survive the transition between blocks.
Meanwhile, overlapping two blocks expands a forward pass from 256 to 512 positions.
Small blocks can exploit unused GPU compute capacity, but that headroom diminishes as wider blocks approach the compute-bound regime \citep{FastdLLMv2}.
Consequently, reducing forward-pass counts does not by itself guarantee higher tokens per second.

In this work, we propose \OURMETHODlong{} (\OURMETHODtext), a new training-free multi-block decoding algorithm built upon DiffusionGemma (Figure~\ref{fig:pipeline}). To the best of our knowledge, this is the first study of concurrent multi-block decoding for a diffusion language model that already achieves around 20 tokens per forward pass with native 256-token blocks. We investigate the design choices impacting the effectiveness of concurrent decoding: how to structure attention between the two blocks (current block and warm-up block), and which components of the warm-up state must be retained when the warm-up block becomes the current block (e.g., tokens, self-conditioning signals, and the denoising time). To compare fairly how gains in tokens per forward pass transfer to throughput, we optimize the precision, attention, mixture-of-experts and sampler components of a denoising step for both the single- and the paired-block input sequence layout (Section~\ref{subsec:exp-kernels}).

Averaged over ten benchmarks on DiffusionGemma, \OURMETHODtext{} increases TPF from 15.64 to 22.25 (+42.3\%) and single-request denoising throughput from 2,375 to 2,761 TPS (+16.3\%) on one GB300 GPU, with a small reduction in average accuracy (Section~\ref{sec:experiments}). By contrast, the direct way to add parallelism, doubling the block to 512 positions, lowers throughput by 41.4\% and reduces accuracy further. Our ablations identify sampler-state promotion as central to the gain: withholding self-conditioning during warm-up lowers throughput 19.9\% below the baseline. Together, these results demonstrate that concurrent decoding, supported by sampler-state promotion, can improve throughput even when the baseline already achieves substantial parallelism within each block.

\section{Background}
\label{sec:background}

\subsection{Discrete Diffusion Language Models}
\label{sec:bg-blockar}

A discrete diffusion language model generates text by reversing a corruption process on token sequences rather than by extending a sequence left to right
\citep{D3PM,MultinomialDiffusion}. A forward process maps a clean sequence $x_0$ to progressively noisier $x_t$, $t\in[0,1]$, and a learned denoiser is trained to invert it, so each reverse step may revise every position at once. Two corruption processes are in use. \emph{Masked} diffusion replaces a token with a dedicated mask token and reveals positions monotonically \citep{MDLM,LLaDA,Dream,NLD}; \emph{uniform-state} diffusion replaces it with another token drawn from the vocabulary, so every position holds real text and remains revisable until it is committed \citep{MultinomialDiffusion,SEDD,UDLM,Sumi}. DiffusionGemma is uniform-state \citep{DiffusionGemma}, and the distinction is what makes state inheritance a question in Section~\ref{sec:method-promotion}.

Fully bidirectional diffusion language models cannot reuse a KV cache or vary
their output length, since every position attends to every other. Block
Diffusion removes both limits by factorizing generation over blocks, with
causal attention across blocks and bidirectional attention within the current block
\citep{BlockDiffusion}.
DiffusionGemma adopts this structure and calls each block a \emph{canvas}; to align with the broader diffusion language model literature, we use \emph{block} throughout for this fixed-size group of consecutive positions.
Specifically, a causal encoder $\enc$ turns the context $h$ into a KV cache
$H=\enc(h)$, and a bidirectional decoder refines a block of length $C=256$
while cross-attending to $H$. Once a block is fully denoised, written
$\hat x_0$, one encoder pass appends its keys and values,
$H \gets H \oplus \enc(\hat x_0)$, and only then is the next block initialized;
committed tokens are therefore encoded once and never re-encoded.

\subsection{DiffusionGemma}
\label{sec:bg-block}

DiffusionGemma is the model we build on: an instruction-tuned mixture-of-experts
diffusion language model with 26 billion total and 4 billion active parameters
per token \citep{DiffusionGemma}. Of the design choices behind its efficiency,
three interact with \OURMETHOD, and we specify them here: the entropy-bounded
sampler, adaptive stopping, and the self-conditioning signal.

A block is refined over at most $N=48$ denoising steps with step size
$\Delta t=1/N$. Let $\Vcal$ be the vocabulary, so that $\Vcal^C$ is the set of
token sequences of length $C$, and let $x_t\in\Vcal^C$ be the block at
time $t\in[0,1]$, starting at $t=1$ with $x_1\sim\unif(\Vcal^C)$. Each step
calls the decoder on the block, the cache $H$ and a self-conditioning signal
$z_t$ carrying the previous predictions, returning $L_t=\dec(x_t,z_t,t,H)$, from
which
\begin{equation}
  \hat p_0 = \softmax\big(L_t/\tau_t\big), \qquad
  z_{t-\Delta t} = \operatorname{FFW}(\hat p_0 E), \qquad
  x_{t-\Delta t} \sim \step(x_t,\hat p_0),
\label{eq:denoising-iteration}
\end{equation}
with $E$ the embedding matrix, $\operatorname{FFW}$ the self-conditioning module,
and $\tau_t$ falling linearly from $0.8$ to $0.4$.
The sampler acts in $\step$, deciding position by position which predictions to
keep. Positions are accepted in order of increasing predictive entropy until
their cumulative entropy reaches a budget $b=0.1$, which bounds the joint mutual
information and so keeps them approximately independent
\citep{EntropyBoundedSampling}. An accepted position takes a token sampled from
$\hat p_0$, and every other position is corrupted again by sampling a new token
uniformly from $\Vcal$ \citep{MultinomialDiffusion}.
Denoising stops early once mean entropy falls below
$e_{\mathrm{stop}}=0.005$ with consecutive steps agreeing on the argmax.
Acceptance therefore binds for one step only, and no position is fixed while its
block is still being refined; that ends at commitment, after which the tokens
are committed permanently.

\subsection{Inference Efficiency Metrics}
\label{sec:bg-metrics}

Adaptive stopping makes the number of denoising steps a random variable, so DiffusionGemma measures denoising efficiency empirically \citep{DiffusionGemma}. Let $K$ be the number of blocks a response spans, $C_k$ the valid tokens produced in block $k$, counted up to the first end-of-sequence token, $F$ the denoising passes executed, and $T_{\text{den}}$ the average GPU time of a denoising pass. Tokens per forward and tokens per second are
\begin{equation}
 \text{TPF} \triangleq \frac{\sum_{k=1}^{K}C_k}{1+(K-1)+F},
 \qquad
 \text{TPS} \triangleq \frac{\sum_{k=1}^{K}C_k}{F\,T_{\text{den}}},
 \label{eq:metrics}
\end{equation}
where the TPF denominator counts every forward pass that serves the request: the prompt prefill, the $K-1$ passes that append each committed block to the cache, and the $F$ denoising passes. TPF counts useful output per invocation and is independent of hardware. Decode TPS (throughput) counts useful output per second of denoising GPU time; it excludes the prefill and cache-append passes and host-side overhead, so that it isolates the cost of denoising (Appendix~\ref{app:sweeps} gives wall TPS).

\section{Method}
\label{sec:method}

\OURMETHODlong (\OURMETHOD) denoises two blocks in each forward pass. The decoder runs over the input sequence $[x^\cur\mid x^\wu]$ of $2C$ positions, where $x^\cur$ is the block on its way to being committed and $x^\wu$ is the one after it. As depicted in Figure~\ref{fig:pipeline}, a block-causal mask lets $x^\wu$ attend to $x^\cur$ but not the reverse. When the current block $x^\cur$ is committed, the warm-up block $x^\wu$ is promoted into its place and a new block fills the vacated half. In the following we present design choices for \OURMETHOD: Section~\ref{sec:method-paired} specifies the paired-block layout, Section~\ref{sec:method-promotion} specifies what is inherited at promotion, and Section~\ref{sec:method-amortization} gives the condition under which the \OURMETHOD schedule raises throughput.

\subsection{Paired-Block Layout}
\label{sec:method-paired}

Rather than running the decoder over the $C$ positions of one block, we lay the two blocks side by side and run it over $2C$ positions, so one pass returns logits for both blocks. A single block still commits $C$ tokens and the cache is still appended once per block, so \OURMETHOD composes with any method acting inside a block. The block-causal mask defines the attention pattern of each block: queries in $x^\cur$ attend to the committed cache $H$ and to $x^\cur$ itself, while queries in $x^\wu$ attend to $H$, $x^\cur$ and $x^\wu$, and attention stays bidirectional inside each half as when a block is denoised alone. No query in $x^\cur$ reads a key in $x^\wu$, and every remaining operation acts on one position at a time, so in exact arithmetic the logits over $x^\cur$ are those that block would have produced alone (cf. Appendix~\ref{app:proofs}).

Although the attention mask prevents the current block from attending to the warm-up block, it does not guarantee numerical equivalence to decoding the current block alone. Processing both blocks in a single grouped mixture-of-experts call changes the computation shape and can introduce floating-point differences that alter subsequent sampling decisions, so we compare configurations on matched benchmarks and seeds rather than asserting identical trajectories; per-benchmark accuracy is given in Appendix~\ref{app:sweeps}. Appendix~\ref{app:implementation} describes the paired-block adaptations, including the shared quantization scales computed from the current block and used by both blocks.

\subsection{State Inheritance at Promotion}
\label{sec:method-promotion}

Both blocks take a denoising step from every forward pass, so the warm-up block accumulates progress while the current block is still changing. Which part of that progress the warm-up block keeps at promotion, i.e., when the current block commits, is what we call \emph{state inheritance}. We denote the state of a block after its $n$-th pass as $s_n=(x_n,\, z_n,\, t_n,\, \sigma_n)$, where $x_n$ denotes the block tokens, $z_n$ the self-conditioning signal, $t_n$ the denoising time, and $\sigma_n$ the adaptive stopping bookkeeping (i.e. the recent argmax predictions and the stability counter). We call $(z,t,\sigma)$ the \emph{sampler state}. As DiffusionGemma uses uniform-state diffusion, every position holds a real token from the vocabulary from the first step onward, so $x$ alone does not reveal how far a block has progressed and the sampler state cannot be reconstructed solely from $x$.

Assume the current block is committed after $n_k$ paired denoising passes. Its $C$ tokens are appended to the cache and the warm-up block moves into the current position carrying $(x_{n_k},\, z_{n_k},\, t_{n_k},\, \sigma_{\mathrm{init}})$. The tokens, the signal and the denoising time are promoted, and only $\sigma$ is reset to its initial value $\sigma_\mathrm{init}$. The reset of $\sigma$ prevents the promoted block from stopping on a stability judgment formed before its context has been finalized. A new warm-up block then fills the vacated half and the cycle repeats. No warm-up block commits out of order, and the final current block at the end of generation simply discards its warm-up block (cf. Algorithm~\ref{alg:sampling}, Appendix~\ref{app:algorithm}).

\subsection{Break-Even Criterion}
\label{sec:method-amortization}
Although the paired-block layout advances two blocks per pass, it does not halve the passes per block, because a warm-up step conditions on a current block that has not yet been committed.
Let $\bar n$ be the expected number of denoising passes a block needs under \basesm{}, and $n_k$ the number of passes block $k$ spends as the current block under a concurrent schedule. We model the passes block $k+1$ then needs as the current block as $\bar n-\alpha n_k$, so that $\alpha$ is the expected number of current-block passes saved per pass its predecessor spends as the current block. Under \OURMETHOD, block $k+1$ takes one warm-up step on each of those passes, so $\alpha$ is also the saving per warm-up step; a schedule that starts the warm-up block later, such as \flowblockw{} (Section~\ref{sec:forward-result}), takes fewer warm-up steps per predecessor pass and therefore has a smaller $\alpha$ even if each step is worth as much. We make two simplifying assumptions: (1) $\alpha$ is a constant, independent of $n_k$ and of $k$, i.e., the saving is linear in the predecessor's passes; and (2) $\bar n$ does not depend on $k$. Assumption (1) is an approximation rather than a property of the sampler; we fit $\alpha$ under it from aggregate counters (cf. Appendix~\ref{app:alpha-estimation}) and do not test it at the level of individual blocks. We further expect $0\le\alpha<1$: a warm-up step should not increase the passes a block needs, and it saves less than a full pass because it conditions on a current block that has not yet been committed. Linearity also makes the recurrence exact in expectation, so under these assumptions the expected passes obey the affine recurrence
\begin{equation}
 n_{k+1} = \bar n - \alpha\, n_k,
 \qquad n_1 = \bar n,
 \qquad\text{with unique fixed point}\quad
 n^\star = \frac{\bar n}{1+\alpha} .
 \label{eq:amortization}
\end{equation}
The map $n\mapsto\bar n-\alpha n$ is a contraction of modulus $\alpha<1$, so $n^\star$ exists and is unique. It is also approached geometrically, since $|n_k-n^\star|=\alpha^{\,k-1}|\bar n-n^\star|$.
Let $r$ denote the ratio of $T_{\text{den}}$ in Eq.~\eqref{eq:metrics} under a schedule to that under \basesm{}, measured at batch size one on the same hardware. Since $T_{\text{den}}$ averages over all denoising passes of a request, $r$ is the relative cost of a paired-block pass when every denoising pass is paired, as in \OURMETHOD, and lies between one and that value when a schedule mixes single- and paired-block passes. At the fixed point, denoising passes per block therefore fall by a factor $1+\alpha$ while a denoising pass takes $r$ times as long, so TPS changes by $(1+\alpha)/r$, and the schedule raises throughput when $\alpha>r-1$.

\begin{table}[t]
\centering
\caption{\textbf{Decoding efficiency across ten benchmarks.} Tokens per forward pass (TPF) and tokens per second (TPS). TPS counts denoising GPU time only (Eq.~\eqref{eq:metrics}). All metrics are averaged over three generation seeds. Best in bold. \OURMETHOD achieves higher TPF and TPS than all baselines on all benchmarks. See benchmark accuracy and wall TPS in Appendix~\ref{app:sweeps}.}
\label{tab:main-results}
\small
\setlength{\tabcolsep}{5pt}
\begin{tabularx}{\textwidth}{@{}l*{8}{R}@{}}
\toprule
& \multicolumn{2}{c}{\basesm{}}
& \multicolumn{2}{c}{\baselg{}}
& \multicolumn{2}{c}{\flowblockw}
& \multicolumn{2}{c}{\textsc{\OURMETHOD}} \\
\cmidrule(lr){2-3}\cmidrule(lr){4-5}\cmidrule(lr){6-7}\cmidrule(lr){8-9}
Benchmark
& TPF & TPS
& TPF & TPS
& TPF & TPS
& TPF & TPS \\
\midrule
GSM8K     & 17.59 & 2800 & 19.93 & 2128 & 18.33 & 2816 & \textbf{22.87} & \textbf{2983} \\
MATH-500  & 17.96 & 2798 & 17.52 & 1503 & 18.26 & 2742 & \textbf{23.05} & \textbf{2911} \\
MBPP      & 16.43 & 2485 & 16.64 & 1208 & 16.89 & 2516 & \textbf{22.41} & \textbf{2733} \\
HumanEval & 18.93 & 2963 & 21.98 & 1996 & 22.04 & 3447 & \textbf{29.22} & \textbf{3858} \\
GPQA      & 13.13 & 1919 & 13.92 & 1018 & 14.30 & 2036 & \textbf{20.38} & \textbf{2435} \\
IFEval    & 15.87 & 2433 & 18.51 & 1787 & 17.43 & 2566 & \textbf{23.78} & \textbf{3000} \\
MMLU      & 13.16 & 1997 & 15.72 & 1430 & 14.05 & 2061 & \textbf{20.77} & \textbf{2611} \\
LCB C++   & 14.37 & 2090 & 14.21 & 902  & 14.34 & 2054 & \textbf{19.37} & \textbf{2230} \\
AIME 2024 & 14.59 & 2159 & 14.26 & 1023 & 15.50 & 2247 & \textbf{19.95} & \textbf{2410} \\
AIME 2025 & 14.38 & 2101 & 14.01 & 917  & 16.42 & 2339 & \textbf{20.72} & \textbf{2442} \\
\midrule
Average   & 15.64 & 2375 & 16.67 & 1391 & 16.76 & 2482 & \textbf{22.25} & \textbf{2761} \\
\bottomrule
\end{tabularx}
\end{table}

\section{Experiments}
\label{sec:experiments}
We design the experimental evaluation to understand three different aspects of \OURMETHOD: (1) Does denoising one block ahead raise throughput over single-block and multi-block baselines across diverse tasks, and at what accuracy cost? (2) Which parts of the sampler state carried across promotion make the wider denoising pass pay for itself? (3) How do the attention mask between the two blocks and the time at which the next block starts affect the break-even criterion $\alpha>r-1$?

\subsection{Experimental Setup}
\label{subsec:exp-setup}

\textbf{Model and benchmarks.} We evaluate DiffusionGemma across ten benchmarks on a single NVIDIA GB300 at batch size one: GSM8K, MATH-500, MBPP, HumanEval, GPQA, IFEval, MMLU, LiveCodeBench C++, AIME 2024 and AIME 2025 \citep{GSM8K,MATH,MBPP,HumanEval,GPQA,IFEval,MMLU,LiveCodeBench}. We report the main comparison averaged over three generation seeds; Appendix~\ref{app:protocol} gives the details about the evaluated population, graders, seeds and serving limits. While the proposed paired-block layout applies to any block-diffusion model, uniform-state diffusion is what makes state inheritance at promotion a design question. Existing uniform-state models, e.g., Sumi~\citep{Sumi}, UDLM~\citep{UDLM} and Duo~\citep{Duo}, are trained and sampled over a full sequence in a single diffusion trajectory, so no state has to cross a block commitment boundary. DiffusionGemma is, to our knowledge, the first publicly released large-scale model that is both uniform-state and block-autoregressive.

\textbf{Baselines.} \basesm{} is the default DiffusionGemma inference configuration, denoising one 256-token block at a time. \baselg{} applies the same algorithm to a single 512-token block and commits all 512 positions at once, the natural way to increase parallelism by widening a block.
\flowblockw{} is a concurrent, training-free multi-block decoder built on the LLaDA2.1 token-editing model~\citep{FlowBlock,LLaDA21}; we apply it by replacing its masked diffusion model forward with DiffusionGemma's (cf. Appendix~\ref{app:baselines}). It decodes up to $W$ blocks concurrently, and we use $W{=}2$, the window of its main results. It starts the next block once its readiness gate exceeds a threshold, and since blocks are decoded concurrently, that block keeps its tokens, self-conditioning signal and denoising time when its predecessor commits. It therefore differs from \OURMETHOD mainly in when the next block starts, and we use it as our comparison for the start time of the warm-up block.
We exclude other multi-block methods~\citep{D2F,MultiBlockDiffusion}, which target masked diffusion models that cannot revise a revealed token.
We further isolate \OURMETHOD's mechanism through ablations of state inheritance and attention between the two blocks. For fair comparison, all configurations above run with kernels specialized to their own input sequence shape.

\subsection{Decoding Efficiency}
\label{sec:forward-result}

\textbf{TPF and throughput.} We investigate the decoding efficiency of the proposed \OURMETHOD, with the main results summarized in Table~\ref{tab:main-results}. \OURMETHOD has higher TPF and TPS than all baselines on all ten benchmarks. Averaged over the ten benchmarks, \OURMETHOD increases TPF by 42.3\% over \basesm{} (22.25 vs.\ 15.64) and TPS by 16.3\% (2,761 vs.\ 2,375), even though each denoising pass costs 1.26$\times$ as much. The margin over \basesm{} is largest on MMLU, HumanEval and GPQA, where TPS rises 31\%, 30\% and 27\% and TPF 58\%, 54\% and 55\%, and smallest on MATH-500, where TPS rises 4\%. MMLU and GPQA are the benchmarks where a block needs the most denoising passes under \basesm{}, so the current block occupies the input sequence for longer and the warm-up block accumulates more steps before it is promoted. These gains come with a 1.37-percentage-point reduction in average accuracy over the ten benchmarks (cf. Table~\ref{tab:appx-accuracy}).

\textbf{Comparison with FlowBlock.} \flowblockw{} raises average TPF over \basesm{} by 7.2\% (15.64 to 16.76) and TPS by 4.5\% (2,375 to 2,482), against 42.3\% and 16.3\% for \OURMETHOD. Since it retains the same sampler state, the difference comes from when it starts the next block, which moves both sides of the break-even criterion of Section~\ref{sec:method-amortization}: waiting for the readiness threshold keeps 92.2\% of denoising passes 256 positions wide, lowering $r$ to 1.03, but forfeits most warm-up steps, lowering $\alpha$ to 0.09. On DiffusionGemma, the gated start thus lowers $\alpha$ by more than it lowers $r-1$, so the margin $\alpha-(r-1)$ is 0.06 under \flowblockw{} against 0.28 under \OURMETHOD, which starts the warm-up block at the earliest possible pass.
The readiness gate guards against drafting on an unsettled prefix, but under uniform-state diffusion an early draft is not locked in: even under \basesm{}, only 5.1\% of positions keep their first-pass token until commitment (Figure~\ref{fig:stability}), since every position not accepted is renoised.
The gate does retain 0.73 points of accuracy over \OURMETHOD (83.43 against 82.70), but at the cost of most of the throughput gain. Widening \flowblock{}'s window to $W{=}3$ or $4$ also makes little difference, with TPF and TPS within 1\% of $W{=}2$ (cf. Appendix~\ref{app:baselines}).

\begin{figure}[t]
\centering
\includegraphics[width=\textwidth]{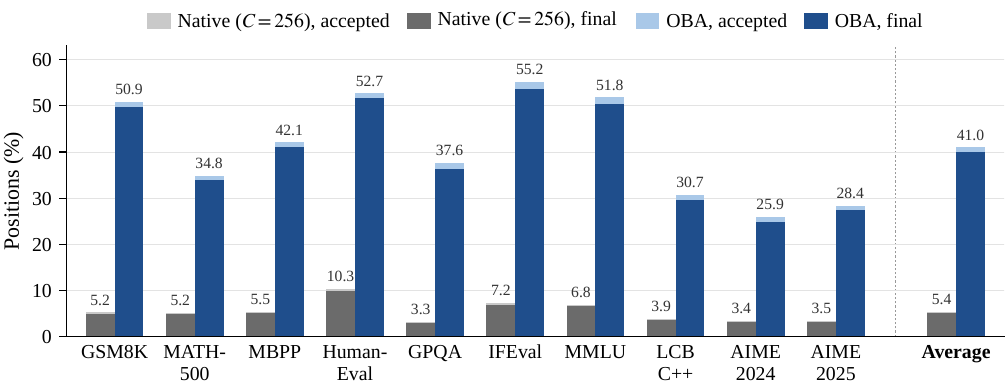}
\caption{\textbf{Warm-up progress of promoted and newly initialized blocks.} For each benchmark, the grey bar corresponds to \basesm{} and the blue bar to \OURMETHOD. The full height of a bar is the fraction of the 256 positions accepted on a block's first pass as the current block, and its dark portion is the fraction whose token at that pass is retained until commitment, so the light segment on top is the fraction of first-pass acceptances later revised. Under \OURMETHOD the first pass is the one after promotion, and every first block, which has no predecessor, is excluded. Fractions are averaged over three generation seeds. On average, a promoted block under \OURMETHOD begins its first pass as the current block with $39.9\%$ of its positions already holding their committed tokens, against $5.1\%$ for a newly initialized block under \basesm{}. Per-benchmark values are given in Table~\ref{tab:stability}.}
\label{fig:stability}
\end{figure}

\textbf{Comparison with a wider block.}
A block-diffusion decoder raises parallelism by committing a wider block, but DiffusionGemma's 256-token block is already close to the point where a single-request forward pass saturates the arithmetic intensity of the GPU \citep{FastdLLMv2}, so the extra positions are no longer nearly free. \baselg{} raises TPF from 15.64 to 16.67, but a denoising pass over 512 positions costs roughly $1.84\times$ one over 256 positions, so its throughput is 41.4\% below \basesm{}. The paired-block layout of \OURMETHOD occupies the same 512 positions at $1.26\times$ the reference cost, largely because its attention is block-causal rather than dense; Section~\ref{sec:ablations} ablates the mask directly. Simply widening the block is therefore not the axis to scale up.

\textbf{Warm-up progress at promotion.} Figure~\ref{fig:stability} measures what a block has settled on its first pass as the current block, which under \OURMETHOD is the first pass after promotion, conditioned on the newly committed context, and under \basesm{} the first pass after initialization from noise. Averaged over the ten benchmarks, $41.0\%$ of the 256 positions of a promoted block are accepted on that pass, against $5.4\%$ for a newly initialized block, and $39.9\%$ against $5.1\%$ keep that pass's token until commitment, so a promoted block arrives with more than a third of its positions already holding their committed tokens. The fraction of first-pass acceptances retained until commitment is similar under both schedules, $97\%$ and $94\%$. The state inherited from warm-up therefore lets many more positions clear the entropy budget on the first pass under the committed context without making those acceptances less reliable, so promotion raises how many positions settle early rather than how reliably they settle. The effect varies with the task. More than half of the positions are accepted on the first pass on IFEval, HumanEval, MMLU and GSM8K, against $26$--$31\%$ on AIME and LiveCodeBench C++, where a block plausibly depends more on the exact content of its predecessor.

\subsection{Ablation Study}
\label{sec:ablations}

\begin{table}[t]
\centering
\small
\caption{\textbf{Inheritance policies.} $\alpha$, relative pass cost $r$, efficiency, and accuracy for seven policies, averaged over the ten benchmarks. Retains gives the inherited state in the $(x,z,t)$ decomposition; Before promotion marks the three policies that instead change what a block does before promotion; $\Delta$TPS is against \basesm{}.}
\label{tab:promotion-ablation}
\setlength{\tabcolsep}{4pt}
\begin{tabular}{@{}lllrrrrrr@{}}
\toprule
Policy & Retains & Before promotion & $\alpha$ & $r$ & TPF & TPS & $\Delta$TPS \% & Acc. \\
\midrule
\basesm{} & -- & -- & 0 & 1.00 & 15.64 & 2375 & -- & 84.07 \\
\midrule
\retfull{} \textsc{(\OURMETHOD)} & $x,z,t$ & full & 0.54 & 1.26 & 22.25 & 2761 & $+16.3$ & 82.70 \\
\retboth{} & $x,z$ & full & 0.41 & 1.26 & 20.76 & 2558 & $+7.7$ & 82.69 \\
\rettoken{} & $x$ & full & 0.25 & 1.26 & 18.83 & 2309 & $-2.8$ & 83.55 \\
\retsignal{} & $z$ & full & 0.32 & 1.27 & 19.61 & 2401 & $+1.1$ & 83.93 \\
\retnoaccept{} & $z$ & no acceptance & 0.32 & 1.27 & 19.62 & 2403 & $+1.2$ & 84.95 \\
\retnosignal{} & $x$ & no self-conditioning & 0.02 & 1.27 & 15.87 & 1902 & $-19.9$ & 84.11 \\
\ocabidirectional{} & $x,z,t$ & dense mask & 0.38 & 1.70 & 20.05 & 1878 & $-20.9$ & 80.29 \\
\bottomrule
\end{tabular}
\end{table}

We ablate what a warm-up block inherits and how it is formed in Table~\ref{tab:promotion-ablation}. \retfull{} retains the full state and is \OURMETHOD; \retboth{} drops the denoising time, \retsignal{} keeps the self-conditioning signal alone, and \rettoken{} keeps the tokens alone. Up to its start threshold, \rettoken{} matches the original FlowBlock~\citep{FlowBlock}, where a block carries only its tokens across the commitment boundary, as under masked diffusion. Three further policies change the warm-up process rather than the inherited state: \retnoaccept{} keeps the self-conditioning signal but accepts no position before promotion, \retnosignal{} accepts positions but withholds self-conditioning, and \ocabidirectional{} replaces the block-causal mask with full bidirectional attention between the two blocks. We estimate $\alpha$ for each configuration by fitting the recurrence of Eq.~\eqref{eq:amortization} on each benchmark to its numbers of denoising passes and blocks and averaging over benchmarks; Appendix~\ref{app:alpha-estimation} gives the details. Since the cache grows with each response, $r$ also depends on response length; the six paired-layout policies have $r$ between 1.26 and 1.27.

\textbf{Inherited state.} Recall that $\alpha$ is the number of current-block passes saved per warm-up step. Promoting the tokens alone, as a masked diffusion decoder would, gives $\alpha=0.25$, below the break-even value $r-1=0.26$, and changes TPS by $-2.8\%$ relative to \basesm{}. Adding the self-conditioning signal raises $\alpha$ to 0.41 and TPS by $7.7\%$, and adding the time raises $\alpha$ to 0.54 and TPS by $16.3\%$, since a block that restarts its temperature schedule spends passes re-deciding positions it had already settled. Each component of the sampler state therefore adds to the saving per warm-up step, and together they account for the entire throughput gain. Carried alone, the signal is worth more than the tokens: \retsignal{} reaches $\alpha=0.32$ against $0.25$ for \rettoken{}.

\textbf{Warm-up process.} \retnoaccept{} and \retsignal{} both reach $\alpha=0.32$, with TPS changing by $+1.2\%$ and $+1.1\%$ relative to \basesm{}. Once the self-conditioning signal is propagated during warm-up, whether some positions were allowed to settle makes little difference. On the other hand, \retnosignal{} lowers $\alpha$ to $0.02$, and changes TPS by $-19.9\%$ relative to \basesm{}. Without self-conditioning, the warm-up does no useful work while the schedule still pays for the wider pass, falling below the break-even criterion of Section~\ref{sec:method-amortization}.

\textbf{Attention between blocks.} \ocabidirectional{} keeps the promoted state and the commitment cadence of \OURMETHOD but lets each block attend to the other, which removes the guarantee of Proposition~\ref{prop:isolation}. It still raises TPF over \basesm{}, 20.05 against 15.64, at $\alpha=0.38$, so the warm-up block does useful work under mutual attention, though less than under the block-causal mask.
However, a pass under \ocabidirectional{} costs $1.70\times$ against $1.26\times$ for the block-causal one, so break-even would require $\alpha>0.70$, and throughput falls 20.9\% below \basesm{}. The gap to the $1.84\times$ of \baselg{}, which shares its input sequence shape and kernels, reflects the context lengths of their responses rather than the mask.
It also lowers average accuracy relative to \OURMETHOD, 80.29 against 82.70, since the current block now conditions on a block that is still near noise.

\subsection{Kernel Implementation}
\label{subsec:exp-kernels}

\textbf{Shared kernels.} Every configuration runs the same implementation, which we optimize for the GB300 in each component of a denoising step (Eq.~\eqref{eq:denoising-iteration}). In precision, the mixture-of-experts layers use a grouped matrix multiplication with 4-bit weights and activations (NVFP4) in place of 4-bit weights with 16-bit activations, the dense projections use FP8, the rotary tables are kept in FP32, and amax reduction, scale finalization and row quantization are fused into one pass. In attention, the kernels are fused and additionally parallelized over the cache $H$, the ragged masked softmax of the fallback path is vectorized, and the sliding-window keys and values are held in a ring buffer. In the mixture-of-experts and dense layers, the input producer is fused, the combine stage is sharded over the hidden dimension, the routing scan is parallelized in a deterministic accumulation order, and the two branches of a layer execute concurrently on separate streams. In the sampler, the self-conditioning module $\operatorname{FFW}$ is vectorized. Table~\ref{tab:main-results} therefore compares decoding algorithms at a common level of optimization, and the gains are over an already optimized single-block decoder.

\textbf{Per-configuration kernels.} Each configuration additionally selects the kernels specialized to its own input sequence shape. \basesm{} uses the 256-position attention kernels; \baselg{} and \ocabidirectional{} use matrix-multiplication and mixture-of-experts plans tuned for 512 positions, together with a dispatch correction without which those attention kernels are disabled at that length; \OURMETHOD{} adds offset-aware attention that skips the masked-out half of the paired-block layout and a fused producer over both halves, neither of which applies to a dense input sequence; \flowblockw{} uses the \basesm{} kernels for single-block passes and the \OURMETHOD kernels for paired ones. The one numerical difference across configurations is that \OURMETHOD{} computes its activation-scale from the current block and shares it with the warm-up half.
A denoising pass takes 7.08\,ms under \basesm{}, 8.94\,ms under \OURMETHOD{} and 13.06\,ms under \baselg{}; averaged over benchmarks, a paired pass costs $r=1.26$ against $r=1.84$ for a dense pass over the same 512 positions. Appendix~\ref{app:implementation} gives the full details.

\section{Related Work}
\label{sec:related}

\textbf{Within-block acceleration.} One line of work reduces the number of denoising passes a single block requires. Fast-dLLM \citep{FastdLLM} pairs a block-wise approximate KV cache with confidence-thresholded acceptance, entropy-bounded sampling \citep{EntropyBoundedSampling} replaces the per-position threshold with a cumulative-entropy budget that bounds the joint mutual information the acceptance discards, and dParallel \citep{dParallel} distills a model along its own decoding trajectory while forcing predictive certainty, so that fewer steps suffice. Editing reaches the same end by a different route: the token-to-token stream that LLaDA2.1 \citep{LLaDA21} adds to LLaDA2.0 \citep{LLaDA20} overwrites a revealed token once an alternative becomes sufficiently confident, which permits a lower acceptance threshold because a premature decision can be repaired rather than carried into the output. Uniform-state diffusion has that property natively, since every position holds a real token and stays revisable until its block commits. All of these act within the block being denoised, whereas \OURMETHOD changes how many blocks a pass advances; DiffusionGemma already carries several of them, so \OURMETHOD starts from a baseline at around 20 TPF \citep{DiffusionGemma} rather than the low single digits at which prior concurrent decoding methods are evaluated.

\textbf{Multi-block decoding.} Another line of work focuses on serialization across blocks. D2F \citep{D2F} and MBD-LM \citep{MultiBlockDiffusion} post-train a masked diffusion LM to denoise a block from an incomplete prefix and decode a bounded window of active blocks; FlowBlock \citep{FlowBlock} does so without training, admitting a downstream block once most of its predecessor's remaining masked positions are confident enough to be unmasked, and relying on token-to-token editing to repair tokens drafted against a stale prefix. Three properties separate this line from \OURMETHOD. First, each targets masked diffusion, where the mask ratio reports a block's progress and a promoted block carries nothing beyond its revealed tokens; under uniform state progress is invisible in $x$, and the sampler state $(z,t,\sigma)$ must cross the commitment boundary (Section~\ref{sec:method-promotion}). Second, all use block sizes close to 32, where the extra positions are close to free; at DiffusionGemma's 256-token block a paired-block pass costs $r=1.26$, so the schedule pays only when $\alpha>r-1$ (Section~\ref{sec:method-amortization}). Third, \OURMETHOD is training-free and needs no threshold to decide when to start the warm-up block; applied to DiffusionGemma with the same sampler state, FlowBlock's threshold delays the next block enough that its throughput gain over \basesm{} is 4.5\%, against 16.3\% for \OURMETHOD (Section~\ref{sec:forward-result}).

\section{Conclusion}
\label{sec:conclusion}
We presented \OURMETHODlong{} (\OURMETHODtext), a training-free decoding algorithm that denoises the next block alongside the current one in a single forward pass. A block-causal mask leaves the current block denoised as if it were alone, and at commitment the warm-up block is promoted with its tokens and sampler state. Such a schedule raises throughput when the passes saved per warm-up step, $\alpha$, exceed the extra cost of a paired-block pass, $r-1$. On DiffusionGemma, averaged over ten benchmarks, \OURMETHOD meets this condition with $\alpha=0.54$ against $r-1=0.26$, raising tokens per forward pass (TPF) from 15.64 to 22.25 and single-request denoising throughput (TPS) by 16.3\% at a cost of 1.37 accuracy points. Ablations attribute this gain to the sampler state: promoting the tokens alone gives $\alpha=0.25$ and lowers throughput, and adding the self-conditioning signal and then the denoising time raises $\alpha$ to 0.41 and 0.54. More broadly, these results show that parallelism across blocks remains available to a model that already decodes one block efficiently, and that it is unlocked by promoting the sampler state across the commitment boundary.

\section{Limitations}
\label{sec:limitations}

We evaluate at batch size one, where the latency advantage of diffusion over autoregressive decoding is most pronounced, and on a single GPU type. We evaluate a single model, as DiffusionGemma is to our knowledge the only diffusion language model that is both uniform-state and block-autoregressive. The promotion mechanism is specific to that setting: under masked diffusion a block's progress is readable from its mask and its revealed tokens are the only state to inherit.

\subsection*{AI use statement}
In this work, we used generative AI coding agents (Claude Code and Codex) to implement methods: the \OURMETHOD decoding algorithm, the baseline configurations including the \flowblock adaptation, the inference kernels described in Appendix~\ref{app:implementation}, and the evaluation pipeline, and to run the experiments. We have not used generative AI tools to propose research ideas or hypotheses, to develop the conceptual framework, to design the methodology or experiments, to formulate the mathematical claims or write the proofs in Section~\ref{sec:method-amortization} and Appendix~\ref{app:proofs}, or to interpret results; these were done by the authors. Generating synthetic data, cleaning or reformatting datasets, translation, and qualitative data analysis are not applicable to this work. Additionally, we used generative AI tools to edit parts of the manuscript for readability. We have reviewed all AI-assisted work, and every AI-assisted component was verified by at least two authors: the generated code was reviewed, the kernel-level changes were validated against reference implementations for numerical agreement and unchanged accuracy, the reported numbers were checked against the per-case run records, and every claim, equation, and citation in the manuscript was checked. We take responsibility for the final content of this work, including text, claims, code, and experimental results produced with the aid of generative AI.

\subsection*{Ethics statement}

This work adheres to the ICLR Code of Ethics. All experiments use publicly available benchmarks and a pretrained model (DiffusionGemma), with our contributions centered on methodological and algorithmic advancements, in particular the development of the \OURMETHODlong{} (\OURMETHODtext) decoding algorithm. The method is training-free and introduces no new capabilities beyond those of the underlying model. The study does not involve human subjects, personal data, or other sensitive information, and no applications with a high likelihood of causing harm are considered. We have carefully examined potential risks and broader societal impacts of this research and did not identify significant ethical concerns.

\subsection*{Reproducibility statement}

\OURMETHOD is training-free and requires no change to model weights. The decoding algorithm is specified in Section~\ref{sec:method} and in full as Algorithm~\ref{alg:sampling} in Appendix~\ref{app:algorithm}; the attention mask, the promoted state, and the reset at promotion are given in Sections~\ref{sec:method-paired} and~\ref{sec:method-promotion}. The assumptions behind the break-even criterion are stated in Section~\ref{sec:method-amortization}. Section~\ref{subsec:exp-setup} lists the model, the ten benchmarks, the generation seeds, the hardware, and the baseline configurations, and all sampler hyperparameters are the DiffusionGemma defaults reported in Section~\ref{sec:bg-block}. Appendix~\ref{app:protocol} gives the seed offsets and serving limits, Appendix~\ref{app:baselines} gives the \flowblock thresholds, and Appendix~\ref{app:implementation} describes the kernel-level adaptations needed to reproduce the reported throughput, since it, unlike TPF, depends on the implementation.

\bibliography{one-block-ahead-white-paper}
\bibliographystyle{references}

\clearpage
\appendix

\section{Complete Decoder State Machine}
\label{app:algorithm}

Algorithm~\ref{alg:sampling} gives the full decoding loop, with the additions
to the \basesm{} sampler marked in blue. The state machine below describes the
same loop in words, for $C=256$.

\begin{algorithm}[H]
\caption{DiffusionGemma sampling with
\textcolor{blue}{one-block-ahead decoding}}
\label{alg:sampling}
\footnotesize
\begin{algorithmic}[1]

\Require Context $h$; block length $C=256$;
     maximum denoising steps $N=48$
\Require Entropy budget $b=0.1$;
     stopping threshold $e_{\mathrm{stop}}=0.005$
\Require Temperature endpoints
     $\tau_{\max}=0.8$, $\tau_{\min}=0.4$;
     output stopping rule
\Require \textcolor{blue}{
  Warm-up block $x^\wu$ and paired input sequence $[x^\cur\mid x^\wu]$
}

\Function{DenoiseStep}{$L,t$}
  \State $\tau\gets
    (\tau_{\max}-\tau_{\min})t+\tau_{\min}$,
    \quad $\hat p_0\gets\softmax(L/\tau)$
  \State $\hat x_0\gets\arg\max L$,
    \quad $e^i\gets\operatorname{Entropy}(\hat p_0^i)$
  \State $U\gets
    \operatorname{LowEntropySet}((e^i)_{i=1}^{C},b)$
    \Comment{Entropy-bounded accept set}
  \State Draw $x_D^i\sim\hat p_0^i$ and
    $\tilde x^i\sim\unif(\Vcal)$
  \State $x_U^+\gets x_{D,U}$,
    \quad $x_{\bar U}^+\gets\tilde x_{\bar U}$,
    \quad $z^+\gets\operatorname{FFW}(\hat p_0E)$
  \State \Return $(\hat x_0,e,x^+,z^+)$
\EndFunction

\State $H\gets\enc(h)$,
    \quad $\Delta t\gets1/N$,
    \quad $Y\gets()$
\State $x^\cur\sim\unif(\Vcal^C)$,
    \quad $z^\cur\gets\boldsymbol{0}$

\State \textcolor{blue}{$
  x^\wu\sim\unif(\Vcal^C),\quad z^\wu\gets\boldsymbol{0}$}
\State \textcolor{blue}{$n_{\mathrm{inh}}\gets0,\quad n_\wu\gets0$}
  \Comment{\textcolor{blue}{
    $n_{\mathrm{inh}}$: steps inherited by $x^\cur$; $n_\wu$: steps accumulated by $x^\wu$
  }}

\Loop
  \State $\hat x_0^{\cur,\mathrm{prev}}\gets\bot$
    \Comment{\textcolor{blue}{Reset $\sigma$ after promotion}}

  \State \textcolor{blue}{$c_\cur\gets\min\{n_{\mathrm{inh}},N-1\}$}
    \Comment{\textcolor{blue}{Reserve one committed-context step}}

  \For{$n=1,\ldots,N-\textcolor{blue}{c_\cur}$}
    \State $t_\cur\gets
      1-\bigl(n-1+\textcolor{blue}{c_\cur}\bigr)\Delta t$
    \State \textcolor{blue}{$t_\wu\gets1-(n-1)\Delta t$}

    \State \textcolor{blue}{$
      (L^\cur,L^\wu)\gets
      \dec(
        [x^\cur\mid x^\wu],
        [z^\cur\mid z^\wu],
        (t_\cur,t_\wu),
        H
      )$
      }
      \Comment{\textcolor{blue}{$x^\cur$ sees $H$ and itself; $x^\wu$ also sees $x^\cur$}}

    \State $(\hat x_0^\cur,e^\cur,x^{\cur,+},z^{\cur,+})
      \gets\Call{DenoiseStep}{L^\cur,t_\cur}$

    \State \textcolor{blue}{$
      (\hat x_0^\wu,e^\wu,x^{\wu,+},z^{\wu,+})
      \gets\Call{DenoiseStep}{L^\wu,t_\wu}$}
      \Comment{\textcolor{blue}{No stopping test for the warm-up block}}

    \State \textcolor{blue}{$
      (x^\wu,z^\wu,n_\wu)
      \gets(x^{\wu,+},z^{\wu,+},n_\wu+1)$}
      \Comment{\textcolor{blue}{Keep the update even if the current block stops}}

    \State $\mathrm{stop}_\cur\gets
      (\hat x_0^{\cur,\mathrm{prev}}\neq\bot)
      \land
      (\operatorname{mean}(e^\cur)\leq e_{\mathrm{stop}})
      \land
      (\hat x_0^\cur=\hat x_0^{\cur,\mathrm{prev}})$

    \State \textbf{if }
      $\mathrm{stop}_\cur
      \lor n=N-\textcolor{blue}{c_\cur}$
      \textbf{ then break}

    \State $(x^\cur,z^\cur,\hat x_0^{\cur,\mathrm{prev}})
      \gets(x^{\cur,+},z^{\cur,+},\hat x_0^\cur)$
  \EndFor

  \State $y\gets\operatorname{Publish}(\hat x_0^\cur)$,
    \quad $Y\gets Y\mathbin{\|}y$
    \Comment{\textcolor{blue}{Commit the current block only}}

  \State \textbf{if output stopping rule is met then return} $Y$
  \State $H\gets H\oplus\enc(y)$

  \State \textcolor{blue}{$
    (x^\cur,z^\cur,n_{\mathrm{inh}})\gets(x^\wu,z^\wu,n_\wu)$}
    \Comment{\textcolor{blue}{Promote $x^\wu$ with its denoising time}}

  \State \textcolor{blue}{$
    x^\wu\sim\unif(\Vcal^C),\quad
    z^\wu\gets\boldsymbol{0},\quad
    n_\wu\gets0$}
\EndLoop

\end{algorithmic}
\end{algorithm}

\begin{enumerate}
 \item Encode the prompt causally to form $H$. Initialize the current block
 $x^\cur$ and the warm-up block $x^\wu$ from independent noise streams, with
 null self-conditioning, initial denoising time, and fresh stopping
 bookkeeping. Hold them as one input sequence $[x^\cur\mid x^\wu]$ of 512
 positions.
 \item On every denoising iteration, evaluate $[x^\cur\mid x^\wu]$ in one
 block-causal forward pass. Queries in $x^\cur$ cannot read keys in $x^\wu$,
 and queries in $x^\wu$ read $H$, $x^\cur$ and $x^\wu$. This is one forward
 pass and one denoising step for each half.
 \item Apply the same token transition and self-conditioning update to both
 halves. Only $x^\cur$ is tested against the stopping rule. The warm-up block
 is not committed, but its tokens, self-conditioning signal and position on the
 denoising time advance.
 \item When $x^\cur$ stops, commit its $C=256$ tokens up to the first
 end-of-sequence token. If that slice terminates the request, discard the
 warm-up half. Otherwise append the committed tokens to $H$ through the
 ordinary causal encoder pass, counted once per block as the $K-1$ term of
 Eq.~\eqref{eq:metrics} and not as a denoising pass.
 \item After the append, move the warm-up half into the current positions,
 preserving its tokens $x^\wu$, its signal $z^\wu$ and its time $t^\wu$, and
 reset $\sigma$, so that the promoted block is reconsidered under the newly
 committed context.
 \item Seed a new 256-position warm-up block in the vacated half and resume
 the same 512-position loop. Promotion never creates a third live block, since
 the old current half has moved into $H$.
\end{enumerate}

A promoted block inherits at most $N-1$ of its $N$ steps, the
$c_\cur\gets\min\{n_{\mathrm{inh}},N-1\}$ line above, so at least one pass
always runs under the committed context. \basesm{} uses the same causal append
after each committed 256-token slice, and so do \ocabidirectional{}, which
differs from \OURMETHOD{} only in its mask. \baselg{} commits and appends 512 tokens at once and has
no midpoint append.

\section{Kernel Implementation}
\label{app:implementation}

This appendix expands Section~\ref{subsec:exp-kernels}. All five configurations of the three-seed comparison run on one inference engine, which we optimize for a single NVIDIA GB300 so that a difference in throughput reflects a difference in decoding algorithm rather than in kernel quality. \ocabidirectional{} and \flowblockw{} run from later builds of that engine that add their decoding loops; each build was qualified by reproducing every generated token, finish reason and forward count of \basesm{}, \OURMETHOD{} and \baselg{} on a set of archived GSM8K cases. All configurations share the model export, sampling parameters, entropy-bounded sampler, stopping rule, batch size and plan source for the dense kernels, and each receives a fresh kernel profile for its own input sequence shape, so no configuration is measured on a code path specialized for another. Appendices~\ref{app:impl-precision}--\ref{app:impl-sampler} describe the shared kernels, organized by the component of a denoising step (Eq.~\eqref{eq:denoising-iteration}) they act on; Appendix~\ref{app:impl-layout} describes what each configuration selects for its own block layout; Appendix~\ref{app:impl-measurement} gives the measurement protocol behind $T_{\text{den}}$ and $r$. We describe each at the level a reimplementation requires rather than by internal kernel name.

\subsection{Precision and Quantization}
\label{app:impl-precision}

The mixture-of-experts layers use a grouped matrix multiplication with 4-bit
weights and activations in the NVFP4 format, replacing an implementation with
4-bit weights and 16-bit activations; routing, gather, activation
quantization, the GELU/GLU stage and the combine stage keep their original
structure, and the combine stage preserves a deterministic accumulation order
over route slots. The dense projections use FP8 and the rotary tables are
stored in FP32, with a load-time check that their storage type matches the type
the compiled kernels expect. Amax reduction, scale finalization and row
quantization are fused into one pass, which removes the small scale-conversion
nodes that otherwise separate them, and consumers are launched with a dependent
launch on the producer edges we validate individually.

\subsection{Attention}
\label{app:impl-attention}

The decoder uses fused attention kernels, with a separate causal entry for
prompt and context prefill, and is additionally parallelized over the cache $H$
so that a single request at batch size one occupies more of the device. The
ragged masked softmax of the fallback path is vectorized, and its scalar
variant is retained for exact cross-variant validation. Keys and values of the
sliding-attention layers are held in a ring buffer, which reduces their
allocation and is what makes two live blocks affordable at the
context lengths we evaluate.

\subsection{Mixture-of-Experts and Dense Layers}
\label{app:impl-moe}

The mixture-of-experts input producer is fused, the combine stage is sharded
over the hidden dimension, the routing bin-prefix scan is parallelized, the
GELU/GLU stage is vectorized, and the gather-and-quantize stage issues 128-bit
loads. The dense and mixture-of-experts branches of a layer are
data-independent until their outputs are combined, so we execute them
concurrently on separate streams; the transformation is bit-exact and improves
throughput in an A/B test. Tuned plans are provided for both
the 256- and the 512-position shapes.

\subsection{Sampler}
\label{app:impl-sampler}

The output pass of the self-conditioning module $\operatorname{FFW}$ is
vectorized, and its specialization is compiled at two blocks per
multiprocessor, which doubles occupancy without spilling to local memory. The
entropy-bounded acceptance rule of Section~\ref{sec:bg-block} is computed in
one reduction, shared by both halves of a paired pass.

\subsection{Kernels per Block Layout}
\label{app:impl-layout}

\basesm{} denoises 256 positions and uses the corresponding attention kernels. \baselg{} and \ocabidirectional{} denoise 512 positions under a dense mask; both use the matrix-multiplication and mixture-of-experts plans tuned for that shape, and both require a dispatch correction, without which the fused attention kernels are disabled whenever the input sequence holds 512 positions and the slower fallback path is taken silently.
Their $T_{\text{den}}$ therefore differ only through the context lengths their responses reach (Appendix~\ref{app:impl-measurement}).
\flowblockw{} switches between a 256-position pass, which uses the \basesm{} kernels, and a paired 512-position pass once its next block has started, which uses the \OURMETHOD{} kernels below. \OURMETHOD{} denoises the same 512 positions under the block-causal mask of Section~\ref{sec:method-paired} and adds three specializations, whose guards require batch size one, a 512-position input sequence, a 256-token commitment, the block-causal layout and the inherited timescale, and which fail closed outside that contract.

\paragraph{Offset-aware paired attention.}
The sliding-attention layers see the input sequence as two 256-query blocks
with different visibility: $x^\cur$ attends to $H$ and to itself, and $x^\wu$
additionally to $x^\cur$. One dispatch consumes the two device offsets and
lengths over their overlapping cached key and value intervals rather than
issuing two calls, and skips the masked-out quadrant. Neither \basesm{} nor
\baselg{} has two visibility views.

\paragraph{Full-sequence producers with current-block quantization scales.}
Producer sequences shared by both halves run over all 512 positions, while
ownership of the dynamic low-precision scales stays with the leading 256
positions, as the commitment contract requires; the per-sequence expert maxima
this path produces feed the descriptor builder for $x^\cur$ directly, avoiding a
second scan of the leading half. The mixture-of-experts operators consequently
process the paired-block as one grouped call rather than two independent
256-position calls.

\paragraph{Block-parallel completion metadata.}
A batch-one completion kernel assigns one thread per committable position,
performs fixed-order end-of-sequence reductions, preserves both halves when
$x^\cur$ completes, and commits only the leading 256 positions.

None of these changes the state transition, the random draws, the stopping rule
or the forward accounting; they change how the input sequence is dispatched.
Two consequences are worth stating. First, a paired pass computes its
activation scale from $x^\cur$ alone and shares it with $x^\wu$, so the scale a
position sees does not depend on the warm-up half; a scale reduced over both
halves would make the logits over $x^\cur$ depend on $x^\wu$ through
quantization, which the attention mask cannot prevent. Second, grouped
execution over 512 rows changes the reduction shape of the matrix
multiplications, so the logits over $x^\cur$ are not bitwise identical to those
of a single-block pass even under that ownership rule, and a floating-point
difference can change which positions the sampler accepts at a step. This is
why the comparison uses matched prompts and seeds rather than asserting
identical trajectories (Proposition~\ref{prop:isolation}). Committed keys and values are written by the
separate encoder pass counted as the $K-1$ term of Eq.~\eqref{eq:metrics},
rather than from the denoising pass itself, which we tested and found
nondeterministic under load.

\subsection{Measurement Protocol}
\label{app:impl-measurement}

We obtain $T_{\text{den}}$ from the main comparison itself rather than from a separate microbenchmark. For each configuration and benchmark, we divide the GPU time spent in denoising passes, summed over the cases and three seeds at batch size one, by the number of denoising passes $\sum_i F_i$ (Appendix~\ref{app:alpha-estimation}), and report the unweighted mean of these per-benchmark values. The relative cost $r$ is the ratio of a configuration's $T_{\text{den}}$ to that of \basesm{} on the same benchmark, averaged over benchmarks, so it need not equal the ratio of the averaged $T_{\text{den}}$. Table~\ref{tab:fwd-cost} reports both, and Tables~\ref{tab:appx-alpha-r} and~\ref{tab:appx-ablation-r} give $r$ per benchmark.
Because $H$ grows with the committed length of each response, $T_{\text{den}}$ averages over the context lengths each configuration produces, not the cost at a single fixed $H$, so a configuration whose responses run longer pays more per pass. This is why the two dense 512-position rows of Table~\ref{tab:fwd-cost} differ even though \baselg{} and \ocabidirectional{} use the same input sequence shape and kernels: \ocabidirectional{} generates shorter responses (27.3M against 30.3M tokens, 85 against 240 reaching the output ceiling), so fewer of its passes attend over a long cache, and its $T_{\text{den}}$ is 8\% lower. The paired layout saves on attention, the one component whose cost the block-causal mask reduces, while the matrix multiplications and the mixture-of-experts layers are unaffected by the mask. This is the main reason $r=1.26$ rather than the 1.84 of a dense pass.

\begin{table}[H]
\centering
\small
\caption{\textbf{Cost of a denoising pass.} $T_{\text{den}}$, the GPU denoising time per denoising pass, and the relative cost $r$ for each block layout on GB300. Both are unweighted means over the ten benchmarks of values pooled over cases and three seeds within each benchmark. A paired pass costs less than a dense pass over the same 512 positions, which sets the value of $r$ used in the break-even criterion $\alpha>r-1$ of Section~\ref{sec:method-amortization}. \ocabidirectional{} shares the shape and kernels of \baselg{}; its lower $T_{\text{den}}$ reflects shorter responses and hence fewer long-context passes.}
\label{tab:fwd-cost}
\setlength{\tabcolsep}{6pt}
\begin{tabular}{@{}llrr@{}}
\toprule
Layout & Mask & $T_{\text{den}}$ (ms) & $r$ \\
\midrule
256 positions (\basesm{}) & --- & 7.08 & 1.00 \\
512 positions (\baselg{}) & dense & 13.06 & 1.84 \\
512 positions (\ocabidirectional{}) & dense & 12.06 & 1.70 \\
512 positions (\OURMETHOD{}) & block-causal & 8.94 & 1.26 \\
256/512 positions (\flowblockw{}) & block-causal & 7.29 & 1.03 \\
\bottomrule
\end{tabular}
\end{table}
\section{Proofs}
\label{app:proofs}

\begin{proposition}[Block-triangular isolation]
\label{prop:isolation}
Consider one denoising pass in exact arithmetic under the mask of Section~\ref{sec:method-paired}, and assume every non-attention operation either acts on one position at a time or reduces over positions of $x^\cur$ only. Let the current block enter the pass with state $(x^\cur,z^\cur,t_\cur)$ and cache $H$, and compare it with a $C$-position \basesm{} pass from the same state and cache. Then (i) the logits over $x^\cur$ are independent of the warm-up block state $(x^\wu,z^\wu,t_\wu)$ and equal those of the \basesm{} pass; and (ii) if, in addition, the sampler acting on $x^\cur$ receives the same random variates as under \basesm{}, the resulting state $(x^{\cur,+},z^{\cur,+})$ and stopping decision are equal as well.
\end{proposition}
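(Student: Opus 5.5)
The plan is an induction over the layers of the decoder, tracking the hidden states of the $2C$ positions and showing that the rows belonging to $x^\cur$ never depend on anything in the warm-up half. Write the decoder as a composition of layers $\ell=1,\dots,L$. Let $h^{(\ell)}_i$ denote the hidden state at position $i$ after layer $\ell$, and split the index set into the current half $I_\cur$ and the warm-up half $I_\wu$. The inductive claim is that, for every $\ell$, the family $(h^{(\ell)}_i)_{i\in I_\cur}$ is a function of $(x^\cur,z^\cur,t_\cur,H)$ alone, and that it coincides with the hidden states of the $C$-position \basesm{} pass started from the same state and cache.

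\textbf{Base case.} The input embedding of position $i\in I_\cur$ is built from the token $x^{\cur,i}$, the self-conditioning row $z^{\cur,i}$, the time $t_\cur$, and a positional index. Each of these is a per-position input, so the embedding does not involve $(x^\wu,z^\wu,t_\wu)$. I will check that positions in $I_\cur$ receive the same positional or rotary offsets as in the \basesm{} pass. This holds because $x^\cur$ occupies the leading $C$ slots directly after $H$ in both layouts. The time embedding is also per block: the algorithm passes $(t_\cur,t_\wu)$ and applies $t_\cur$ only to $I_\cur$.

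\textbf{Inductive step.} I treat each layer's operations in three cases.
\begin{enumerate}
\item \emph{Per-position operations.} These are norms, MLPs, expert routing and application per token, residuals, and projections to $q,k,v$. Each maps $h^{(\ell)}_i$ to a function of $h^{(\ell)}_i$ only, so the inductive claim carries through directly.
\item \emph{Reductions over positions.} The proposition assumes any such reduction, for example the shared activation scale, ranges over $I_\cur$ only. The result is therefore a function of current-half states, which by the induction hypothesis match \basesm{}.
\item \emph{Attention.} For a query $i\in I_\cur$, the mask restricts the keys to $H\cup I_\cur$. The masked entries contribute $\exp(-\infty)=0$ to both the numerator and the softmax normaliser, so the attention output equals the softmax over $H\cup I_\cur$ alone. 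The keys and values from $H$ are identical in both passes, and those from $I_\cur$ agree by the induction hypothesis, so this is exactly the \basesm{} attention. Sliding-window layers are handled the same way, since the window is intersected with the same visible set.
\end{enumerate}
Applying the per-position output head at the last layer yields part (i).

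\textbf{Part (ii).} The map in \textsc{DenoiseStep} takes the logits $L^\cur$ and $t_\cur$ and produces the temperature, $\hat p_0$, the entropies, the accept set $U$ (a reduction over the $C$ positions of $x^\cur$ only), the token updates, and $z^+=\operatorname{FFW}(\hat p_0E)$. It is a deterministic function of $L^\cur$, $t_\cur$ and the random variates $(x_D,\tilde x)$. Given the same variates and, by (i), the same logits, the outputs $(x^{\cur,+},z^{\cur,+})$ coincide. The stopping decision depends on $\operatorname{mean}(e^\cur)$, $\hat x_0^\cur$ and $\sigma$. I take $\sigma$ to be identical as part of the entering state, which I will make explicit, since the bookkeeping $\hat x_0^{\cur,\mathrm{prev}}$ enters the stopping test. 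Under that reading the stopping decision coincides as well.

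\textbf{Main obstacle.} The hard part is not the algebra but stating the hypotheses precisely enough that the induction is airtight. Two points need care.
\begin{itemize}
\item \emph{Hidden cross-position couplings.} Expert-capacity limits or load-balancing in the mixture-of-experts layers would couple positions across halves. I will argue that routing is per-token top-$k$ with no capacity dropping, or else fold any such coupling into the "reduces over $x^\cur$ only" hypothesis.
\item \emph{Masked softmax.} The step "masked entries vanish exactly" requires exact arithmetic and an additive $-\infty$ mask. I will state this explicitly, noting that this is where the floating-point caveat of Section~\ref{sec:method-paired} enters.
\end{itemize}
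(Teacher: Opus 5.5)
Your proposal is correct and follows the same route as the paper's proof. The paper also inducts over decoder layers: current-block queries see exactly the key set $H\cup x^\cur$, and every other operation is per-position or reduces over $x^\cur$ only; part (ii) then follows because \textsc{DenoiseStep} and the stopping test are deterministic given identical logits and variates. Your extra care is sound, in particular requiring that the stopping bookkeeping $\sigma$ (the previous argmax) agree between the two passes, which the paper's proof uses as an input to the stopping test without listing it in the entering state $(x^\cur,z^\cur,t_\cur)$.
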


\begin{proof}
(i) The input representations of the current-block positions depend only on $(x^\cur,z^\cur,t_\cur)$, so they are identical in the paired and the \basesm{} evaluation. Suppose they remain identical through layer $\ell$. At layer $\ell+1$, a current-block query attends to exactly the \basesm{} key set, $H$ and $x^\cur$, and to no warm-up key, so its attention output is identical. Every remaining operation at that layer acts on one position at a time or reduces over $x^\cur$ only, so it cannot introduce a dependence on $x^\wu$. By induction, the current-block hidden states and logits are identical at every layer and independent of the warm-up state. (ii) $\textsc{DenoiseStep}$ and the stopping test are deterministic functions of $(L^\cur,t_\cur)$, the previous argmax and the random variates. By (i), identical logits and identical variates give identical $(x^{\cur,+},z^{\cur,+})$ and an identical stopping decision.
\end{proof}

The proposition is a structural statement in exact arithmetic. Shape-specific
reductions and the grouped mixture-of-experts call need not be bitwise
identical across the single-block and paired layouts
(Appendix~\ref{app:impl-layout}), so the empirical comparison uses matched
prompts and seeds rather than asserting identical generated trajectories.

\section{Estimation of \texorpdfstring{$\alpha$}{alpha}}
\label{app:alpha-estimation}

This appendix details how the values of $\alpha$ in Tables~\ref{tab:promotion-ablation}, \ref{tab:appx-alpha-r} and~\ref{tab:appx-ablation-alpha} are obtained. We estimate $\alpha$ separately on each benchmark from the aggregate counters of each configuration, summed over the benchmark's cases and the three generation seeds, under the two assumptions of Section~\ref{sec:method-amortization}, and report the unweighted mean over benchmarks.

\paragraph{Denoising passes and blocks.} For one benchmark and configuration, let $R$ be the number of responses, three times the number of cases, $Y$ the generated tokens and $Q$ the serving forward passes. Each response takes one prompt-prefill pass and $K_i-1$ cache-append passes (Eq.~\eqref{eq:metrics}), so with $K=\sum_i K_i$ the number of blocks, the denoising passes are $F=Q-R-(K-R)=Q-K$. We approximate the number of blocks from the generated tokens, counting the partially filled last block of a response as half a block on average:
\begin{equation}
 K\approx\frac{Y}{C}+\frac{R}{2},\qquad F=Q-K,\qquad n=\frac{F}{K},
 \label{eq:alpha-counts}
\end{equation}
with $C=256$, or $C=512$ for \baselg{}. For \basesm{}, every block starts from noise, so its mean passes per block on the same benchmark is the reference $\bar n$ of Eq.~\eqref{eq:amortization}.

\paragraph{Recurrence fit.} Under Eq.~\eqref{eq:amortization} with constant $\alpha$, a response of $K_i$ blocks has $n_k=n^\star+(-\alpha)^{k-1}(\bar n-n^\star)$, and since $\bar n-n^\star=\alpha n^\star$,
\begin{equation}
 \sum_{k=1}^{K_i}n_k=n^\star\Big[K_i+\frac{\alpha\big(1-(-\alpha)^{K_i}\big)}{1+\alpha}\Big].
\end{equation}
Summing over responses and neglecting $(-\alpha)^{K_i}$ gives
\begin{equation}
 F\approx\frac{\bar n}{1+\alpha}\Big[K+\frac{R\,\alpha}{1+\alpha}\Big],
 \label{eq:alpha-fit}
\end{equation}
one equation in $\alpha$, which we solve numerically on $[-\tfrac12,1)$ with $\bar n$, $F$ and $K$ from Eq.~\eqref{eq:alpha-counts}; it has a single root there for every benchmark and configuration. The second term in the bracket accounts for the first block of each response, which has no predecessor and pays $\bar n$, and for the transient that follows it.

\paragraph{Comparison with a mean over blocks.} Taking $\bar\alpha=\bar n/n-1$ with $n$ the mean over all blocks understates $\alpha$, because every first block pays $\bar n$; dropping the first block of each response overstates it, because the second block is promoted with $n_1=\bar n>n^\star$ warm-up steps and so needs fewer than $n^\star$ passes, after which $n_k$ oscillates toward $n^\star$. Eq.~\eqref{eq:alpha-fit} models both effects. Averaged over benchmarks, \OURMETHOD has $\bar\alpha=0.47$ against the fitted $\alpha=0.54$.

\paragraph{Relation to measured throughput.} Per block, a configuration spends $n$ denoising passes plus, on average, one prefill or cache-append pass, so its TPF is proportional to $1/(n+1)$ at a fixed number of tokens per block. This is why the TPF ratio of \OURMETHOD{} to \basesm{}, $1.43$ averaged over benchmarks, falls short of $1+\bar\alpha=1.47$: the paired layout reduces the denoising passes but not the cache-append pass of each block. TPS in Eq.~\eqref{eq:metrics} counts denoising time only, so on each benchmark the TPS ratio of a configuration to \basesm{} is $(\bar n/n)(\rho/r)$ by construction, with $\rho$ the ratio of tokens per block $Y/K$; this is an identity rather than a test, since $r$ is computed from the same pass counts. Averaged over benchmarks it gives $1.17$ for \OURMETHOD{} and $0.79$ for \ocabidirectional{}. The fixed-point prediction $(1+\alpha)/r$ at the averaged $\alpha$ and $r$ of Table~\ref{tab:promotion-ablation}, $1.22$ and $0.81$, is higher in both cases because the first block of each response pays $\bar n$ and gains nothing from promotion, which $\bar\alpha$ counts and the fixed point does not. Under both estimates, \OURMETHOD{} clears the break-even $\alpha>r-1$ and \ocabidirectional{} does not.

\paragraph{Limitations of the estimate.} The fit inherits the two assumptions of Section~\ref{sec:method-amortization}, a constant $\alpha$ and a $\bar n$ that does not depend on the block index. The block count of Eq.~\eqref{eq:alpha-counts} is an approximation, and neglecting $(-\alpha)^{K_i}$ in Eq.~\eqref{eq:alpha-fit} biases the fit on responses of few blocks; both are coarsest on GSM8K and MMLU, whose responses average about three and five blocks.
\section{Evaluation Protocol}
\label{app:protocol}

\paragraph{Population, grading and seeds.}
The evaluated population is 4,736 cases: GSM8K (1,319), MATH-500 (500), MBPP (500), HumanEval (164), GPQA (198), IFEval (541), the first 1,000 MMLU examples, LiveCodeBench C++ (454), AIME 2024 (30) and AIME 2025 (30). This is the configured population, not every example in each original release. Accuracy is the fraction accepted by a benchmark's primary grader. GSM8K, MATH-500, GPQA, MMLU and the two AIME sets use deterministic parsers or verifiers, IFEval uses strict all-instructions grading, and LiveCodeBench, MBPP and HumanEval use execution-based grading in restricted subprocesses, with grader failures and timeouts recorded separately from semantic failures. Each benchmark starts a fresh server with seed stride one and a per-benchmark base seed, and case identities, prompt-token hashes and seeds are matched across configurations. The main comparison, covering \basesm{}, \baselg{}, \flowblockw{}, \ocabidirectional{} and \OURMETHOD{}, repeats every case under three generation seeds, obtained by adding offsets of 0, 10{,}000 and 20{,}000 to the per-case seed, for $3\times4{,}736=14{,}208$ responses per configuration and $71{,}040$ in total. Every seed, including offset zero, is freshly generated, and within a seed a case receives the same seed under every configuration. Accuracy is single-generation accuracy, not pass@3 or majority voting, averaged over the three seeds. Separate noise streams per block keep warm-up initialization from shifting the current-block stream by construction. The serving limits are 4,096 input, 32,768 generated and 37,888 cached tokens, and a case is counted as truncated when it reaches the output or context ceiling. The ablations of Section~\ref{sec:ablations} follow the same protocol with a single generation seed, except \ocabidirectional{}, which is part of the three-seed comparison.

\paragraph{Averaging over benchmarks.}
Every figure reported over the ten benchmarks, including $\alpha$, $r$ and $T_{\text{den}}$, is an unweighted mean of per-benchmark values, so each benchmark counts equally; weighting by case count would let GSM8K and MMLU, 49\% of the cases, dominate. Averaged over the ten benchmarks and three seeds, \OURMETHOD{} scores 82.70 against 84.07 for \basesm{}, a loss of 1.37 points.

\paragraph{Forward accounting.} For case $i$, the serving forward passes are $Q_i=1+(K_i-1)+F_i=F_i+K_i$: one prompt prefill, $K_i-1$ cache appends and $F_i$ denoising passes, where one paired $[x^\cur\mid x^\wu]$ call counts as one pass and \baselg{} has one append per 512 committed tokens. Within a benchmark, TPF divides the generated tokens $\sum_i Y_i$ by $\sum_i Q_i$ and TPS divides them by the GPU time of the $\sum_i F_i$ denoising passes, with sums over cases and seeds; the average over benchmarks is the unweighted mean of these per-benchmark ratios. The split of $\sum_i Q_i$ into denoising passes and blocks used to estimate $\alpha$ is given in Appendix~\ref{app:alpha-estimation}.

\section{FlowBlock Adaptation}
\label{app:baselines}
FlowBlock~\citep{FlowBlock} decodes up to $W$ blocks concurrently on a masked, token-editing model, admitting the next block once its predecessor is ready and repairing tokens drafted against an uncommitted prefix by editing. We keep its window, its admission rule and its in-order commitment, and replace only its model forward with DiffusionGemma's, which supplies the token transition, entropy-bounded acceptance, renoising and self-conditioning. FlowBlock defines the readiness of the frontier block as the fraction of its still-masked positions whose confidence already clears the unmasking threshold, and as one when no masked position remains~\citep[Eq.~(2)]{FlowBlock}. Under uniform-state diffusion no position is masked, so we derive the analogous quantity from DiffusionGemma's sampler: the still-masked set becomes the positions rejected, and hence renoised, at the previous step, and clearing the threshold becomes acceptance by the entropy-bounded sampler at the current step. Readiness is therefore the fraction of the preceding block's positions rejected at the previous step that are newly accepted at the current one, and one when none were rejected. Adhering to the configurations in FlowBlock, the next block is admitted once readiness exceeds 0.6 on the general and mathematical benchmarks and 0.8 on the coding benchmarks like MBPP, HumanEval and LiveCodeBench C++. As in FlowBlock, readiness is computed from the current-step logits, so a newly admitted block runs at least one pass before it can admit a successor~\citep[Algorithm~1]{FlowBlock}. As blocks are decoded concurrently, a block keeps its tokens, self-conditioning signal and denoising time when its predecessor commits, with the same $N-1$ inheritance cap, stopping-bookkeeping reset and post-promotion pass as \OURMETHOD. At $W{=}2$, the setting of FlowBlock's main results, it therefore differs from \OURMETHOD in when it starts the next block and in a pass width that switches between 256 and 512 positions. Widening the window does not help (Table~\ref{tab:flowblock}): the readiness gate admits a successor rarely enough that more than 92\% of denoising passes run a single block at every width, three blocks are live on at most 0.63\% of passes, and average TPF and TPS stay within 1\% of $W{=}2$.

\begin{table}[H]
\centering
\caption{\textbf{\flowblock{} on DiffusionGemma across window widths $W$}, over three generation seeds. Accuracy, TPF and TPS are unweighted means over the ten benchmarks, as in Table~\ref{tab:main-results}. Live blocks gives the share of denoising passes that run one, two, three or four blocks. $r$ is the cost of a denoising pass relative to \basesm{}, as in Table~\ref{tab:fwd-cost}.}
\label{tab:flowblock}
\small
\setlength{\tabcolsep}{5pt}
\begin{tabular}{@{}lrrrrrrrr@{}}
\toprule
& & & & \multicolumn{4}{c}{Live blocks (\%)} & \\
\cmidrule(lr){5-8}
Setting & Acc.\ (\%) & TPF & TPS & 1 & 2 & 3 & 4 & $r$ \\
\midrule
\basesm{} & 84.07 & 15.64 & 2375 & 100.0 & 0.0 & 0.0 & 0.0 & 1.00 \\
$W{=}2$ & 83.43 & 16.76 & 2482 & 92.2 & 7.8 & 0.0 & 0.0 & 1.03 \\
$W{=}3$ & 83.58 & 16.77 & 2475 & 92.1 & 7.2 & 0.6 & 0.0 & 1.03 \\
$W{=}4$ & 83.58 & 16.68 & 2463 & 92.2 & 7.2 & 0.4 & 0.2 & 1.03 \\
\bottomrule
\end{tabular}
\end{table}

\section{Position-Level Stability of Promoted Blocks}
\label{app:stability}

Table~\ref{tab:stability} reports the per-benchmark values plotted in Figure~\ref{fig:stability}. For each benchmark, Accepted is the fraction of the 256 positions accepted on a block's first pass as the current block, and Final is the fraction whose token at that pass is unchanged at commitment. Final is a subset of Accepted, so their ratio is the fraction of first-pass acceptances that survive to commitment: $97\%$ under \OURMETHOD and $94\%$ under \basesm{} on average.

\begin{table}[H]
\centering
\small
\caption{\textbf{Position-level stability of promoted and newly initialized blocks.} Accepted: fraction of the 256 positions accepted on a block's first pass as the current block. Final: fraction whose token at that pass is unchanged at commitment, a subset of Accepted. Fractions are computed over complete blocks and averaged over three generation seeds.}
\label{tab:stability}
\setlength{\tabcolsep}{5pt}
\begin{tabular}{@{}lrrrr@{}}
\toprule
& \multicolumn{2}{c}{\basesm{}} & \multicolumn{2}{c}{\OURMETHOD} \\
\cmidrule(lr){2-3}\cmidrule(lr){4-5}
Benchmark & Accepted (\%) & Final (\%) & Accepted (\%) & Final (\%) \\
\midrule
GSM8K     & 5.2  & 5.0  & 50.9 & 49.8 \\
MATH-500  & 5.2  & 5.0  & 34.8 & 33.9 \\
MBPP      & 5.5  & 5.2  & 42.1 & 41.1 \\
HumanEval & 10.3 & 10.0 & 52.7 & 51.6 \\
GPQA      & 3.3  & 3.0  & 37.6 & 36.3 \\
IFEval    & 7.2  & 6.9  & 55.2 & 53.7 \\
MMLU      & 6.8  & 6.6  & 51.8 & 50.5 \\
LCB C++   & 3.9  & 3.5  & 30.7 & 29.7 \\
AIME 2024 & 3.4  & 3.1  & 25.9 & 24.9 \\
AIME 2025 & 3.5  & 3.2  & 28.4 & 27.4 \\
\midrule
Average   & 5.4  & 5.1  & 41.0 & 39.9 \\
\bottomrule
\end{tabular}
\end{table}

\section{Per-Benchmark Results}
\label{app:sweeps}

Tables~\ref{tab:appx-accuracy} and~\ref{tab:appx-wall} give per-benchmark accuracy and wall TPS for the main comparison, and Tables~\ref{tab:appx-ablation-accuracy} and~\ref{tab:appx-ablation-wall} give the same for the inheritance policies and the attention ablation of Section~\ref{sec:ablations}; per-benchmark TPF and TPS are in Table~\ref{tab:main-results}. Tables~\ref{tab:appx-alpha-r}, \ref{tab:appx-ablation-alpha} and~\ref{tab:appx-ablation-r} give the per-benchmark $\alpha$ (Appendix~\ref{app:alpha-estimation}) and $r$ (Appendix~\ref{app:impl-measurement}); \basesm{} has $\alpha=0$ and $r=1$ by definition. Wall TPS divides the generated tokens by the summed server request wall time, which includes prefill, cache appends and host-side overhead but excludes model loading, idle time between requests and grading. All values are over three generation seeds, except the five inheritance policies other than \retfull{} in Tables~\ref{tab:appx-ablation-accuracy}--\ref{tab:appx-ablation-r}, which use one.

\begin{table}[H]
\centering
\caption{\textbf{Per-benchmark accuracy (\%) for the main comparison.}}
\label{tab:appx-accuracy}
\scriptsize
\setlength{\tabcolsep}{5pt}
\begin{tabular}{@{}lrrrr@{}}
\toprule
Benchmark & \basesm{} & \baselg{} & \flowblockw{} & \OURMETHOD \\
\midrule
GSM8K & 96.34 & 94.90 & 96.03 & 96.06 \\
MATH-500 & 96.13 & 91.87 & 95.80 & 95.20 \\
MBPP & 94.87 & 91.07 & 94.87 & 93.80 \\
HumanEval & 97.36 & 96.75 & 96.75 & 96.75 \\
GPQA & 71.55 & 62.12 & 70.54 & 69.02 \\
IFEval & 94.09 & 89.28 & 92.67 & 91.44 \\
MMLU & 88.13 & 84.97 & 87.83 & 86.90 \\
LCB C++ & 66.67 & 61.60 & 62.04 & 63.44 \\
AIME 2024 & 74.44 & 62.22 & 74.44 & 71.11 \\
AIME 2025 & 61.11 & 52.22 & 63.33 & 63.33 \\
\midrule
Average & 84.07 & 78.70 & 83.43 & 82.70 \\
\bottomrule
\end{tabular}
\end{table}

\begin{table}[H]
\centering
\caption{\textbf{Per-benchmark wall TPS for the main comparison.} Best in bold.}
\label{tab:appx-wall}
\scriptsize
\setlength{\tabcolsep}{5pt}
\begin{tabular}{@{}lrrrr@{}}
\toprule
Benchmark & \basesm{} & \baselg{} & \flowblockw{} & \OURMETHOD \\
\midrule
GSM8K & 2530 & 1971 & 2516 & \textbf{2624} \\
MATH-500 & 2566 & 1427 & 2495 & \textbf{2605} \\
MBPP & 2296 & 1155 & 2301 & \textbf{2454} \\
HumanEval & 2690 & 1865 & 3057 & \textbf{3338} \\
GPQA & 1809 & 980 & 1895 & \textbf{2216} \\
IFEval & 2252 & 1684 & 2349 & \textbf{2683} \\
MMLU & 1862 & 1358 & 1906 & \textbf{2351} \\
LCB C++ & 1956 & 869 & 1899 & \textbf{2026} \\
AIME 2024 & 2026 & 985 & 2081 & \textbf{2199} \\
AIME 2025 & 1974 & 884 & 2147 & \textbf{2213} \\
\midrule
Average & 2196 & 1318 & 2265 & \textbf{2471} \\
\bottomrule
\end{tabular}
\end{table}

\begin{table}[H]
\centering
\caption{\textbf{Per-benchmark accuracy (\%) for the ablations} of Table~\ref{tab:promotion-ablation}. \retfull{} is \OURMETHOD.}
\label{tab:appx-ablation-accuracy}
\scriptsize
\setlength{\tabcolsep}{4pt}
\begin{tabular}{@{}lrrrrrrr@{}}
\toprule
Benchmark & \retfull{} & \retboth{} & \retsignal{} & \retnoaccept{} & \rettoken{} & \retnosignal{} & \ocabidirectional{} \\
\midrule
GSM8K & 96.06 & 96.36 & 96.21 & 95.98 & 95.45 & 96.29 & 95.32 \\
MATH-500 & 95.20 & 95.20 & 96.20 & 95.40 & 96.60 & 97.20 & 93.20 \\
MBPP & 93.80 & 92.40 & 93.40 & 94.60 & 94.80 & 94.40 & 91.40 \\
HumanEval & 96.75 & 96.34 & 98.78 & 99.39 & 97.56 & 96.95 & 97.36 \\
GPQA & 69.02 & 69.70 & 73.74 & 70.71 & 72.73 & 70.20 & 66.50 \\
IFEval & 91.44 & 89.65 & 91.68 & 91.68 & 92.98 & 92.79 & 91.13 \\
MMLU & 86.90 & 86.90 & 88.10 & 87.90 & 87.30 & 88.30 & 87.17 \\
LCB C++ & 63.44 & 63.66 & 64.54 & 63.88 & 61.45 & 64.98 & 59.69 \\
AIME 2024 & 71.11 & 76.67 & 66.67 & 80.00 & 83.33 & 73.33 & 65.56 \\
AIME 2025 & 63.33 & 60.00 & 70.00 & 70.00 & 53.33 & 66.67 & 55.56 \\
\midrule
Average & 82.70 & 82.69 & 83.93 & 84.95 & 83.55 & 84.11 & 80.29 \\
\bottomrule
\end{tabular}
\end{table}

\begin{table}[H]
\centering
\caption{\textbf{Per-benchmark wall TPS for the ablations} of Table~\ref{tab:promotion-ablation}. \retfull{} is \OURMETHOD. Best in bold.}
\label{tab:appx-ablation-wall}
\scriptsize
\setlength{\tabcolsep}{4pt}
\begin{tabular}{@{}lrrrrrrr@{}}
\toprule
Benchmark & \retfull{} & \retboth{} & \retsignal{} & \retnoaccept{} & \rettoken{} & \retnosignal{} & \ocabidirectional{} \\
\midrule
GSM8K & \textbf{2624} & 2484 & 2371 & 2344 & 2294 & 1996 & 2037 \\
MATH-500 & \textbf{2605} & 2468 & 2341 & 2367 & 2310 & 2013 & 1853 \\
MBPP & \textbf{2454} & 2309 & 2063 & 2146 & 2206 & 1702 & 1698 \\
HumanEval & \textbf{3338} & 3141 & 3136 & 3099 & 2835 & 2526 & 2533 \\
GPQA & \textbf{2216} & 2118 & 1890 & 1816 & 1888 & 1293 & 1437 \\
IFEval & \textbf{2683} & 2479 & 2268 & 2329 & 2293 & 1835 & 2072 \\
MMLU & \textbf{2351} & 2021 & 1941 & 2068 & 1989 & 1515 & 1808 \\
LCB C++ & \textbf{2026} & 1920 & 1723 & 1749 & 1785 & 1542 & 1126 \\
AIME 2024 & \textbf{2199} & 2123 & 1968 & 1950 & 1821 & 1618 & 1498 \\
AIME 2025 & \textbf{2213} & 2009 & 2056 & 1910 & 1798 & 1651 & 1271 \\
\midrule
Average & \textbf{2471} & 2307 & 2176 & 2178 & 2122 & 1769 & 1733 \\
\bottomrule
\end{tabular}
\end{table}

\begin{table}[H]
\centering
\caption{\textbf{Per-benchmark $\alpha$ and $r$ for the main comparison.} $\alpha$ is undefined for \baselg{}, whose blocks are twice as long as those of \basesm{}.}
\label{tab:appx-alpha-r}
\scriptsize
\setlength{\tabcolsep}{5pt}
\begin{tabular}{@{}lrrrrr@{}}
\toprule
& \multicolumn{2}{c}{$\alpha$} & \multicolumn{3}{c}{$r$} \\
\cmidrule(lr){2-3}\cmidrule(lr){4-6}
Benchmark & \flowblockw{} & \OURMETHOD & \baselg{} & \flowblockw{} & \OURMETHOD \\
\midrule
GSM8K & 0.09 & 0.52 & 1.46 & 1.04 & 1.26 \\
MATH-500 & 0.02 & 0.37 & 1.75 & 1.04 & 1.26 \\
MBPP & 0.03 & 0.45 & 2.01 & 1.02 & 1.27 \\
HumanEval & 0.22 & 0.73 & 1.67 & 1.02 & 1.24 \\
GPQA & 0.10 & 0.63 & 1.95 & 1.03 & 1.26 \\
IFEval & 0.12 & 0.63 & 1.55 & 1.05 & 1.26 \\
MMLU & 0.09 & 0.77 & 1.64 & 1.04 & 1.25 \\
LCB C++ & 0.00 & 0.39 & 2.22 & 1.02 & 1.29 \\
AIME 2024 & 0.07 & 0.42 & 2.00 & 1.03 & 1.25 \\
AIME 2025 & 0.16 & 0.49 & 2.17 & 1.03 & 1.27 \\
\midrule
Average & 0.09 & 0.54 & 1.84 & 1.03 & 1.26 \\
\bottomrule
\end{tabular}
\end{table}

\begin{table}[H]
\centering
\caption{\textbf{Per-benchmark $\alpha$ for the ablations} of Table~\ref{tab:promotion-ablation}. \retfull{} is \OURMETHOD. The fit of Eq.~\eqref{eq:alpha-fit} is solved on $[-\tfrac12,1)$ and so admits $\alpha<0$, meaning a promoted block needs more denoising passes than one initialized from noise. The negative values all occur under \retnosignal{}, are at most $0.09$ in magnitude, and average to $\alpha=0.02$. They are consistent with a warm-up that saves no passes, within the approximation of the block count in Eq.~\eqref{eq:alpha-counts}, and possibly with tokens drafted without self-conditioning against an uncommitted predecessor costing slightly more to revise than noise.}
\label{tab:appx-ablation-alpha}
\scriptsize
\setlength{\tabcolsep}{4pt}
\begin{tabular}{@{}lrrrrrrr@{}}
\toprule
Benchmark & \retfull{} & \retboth{} & \retsignal{} & \retnoaccept{} & \rettoken{} & \retnosignal{} & \ocabidirectional{} \\
\midrule
GSM8K & 0.52 & 0.39 & 0.30 & 0.27 & 0.21 & $-0.01$ & 0.40 \\
MATH-500 & 0.37 & 0.28 & 0.20 & 0.21 & 0.15 & $-0.02$ & 0.22 \\
MBPP & 0.45 & 0.35 & 0.18 & 0.23 & 0.24 & $-0.07$ & 0.34 \\
HumanEval & 0.73 & 0.61 & 0.60 & 0.58 & 0.39 & 0.21 & 0.52 \\
GPQA & 0.63 & 0.54 & 0.37 & 0.31 & 0.33 & $-0.09$ & 0.44 \\
IFEval & 0.63 & 0.48 & 0.33 & 0.37 & 0.33 & 0.02 & 0.47 \\
MMLU & 0.77 & 0.47 & 0.40 & 0.50 & 0.41 & 0.02 & 0.66 \\
LCB C++ & 0.39 & 0.30 & 0.16 & 0.18 & 0.17 & 0.00 & 0.22 \\
AIME 2024 & 0.42 & 0.36 & 0.26 & 0.24 & 0.14 & 0.01 & 0.28 \\
AIME 2025 & 0.49 & 0.35 & 0.36 & 0.28 & 0.16 & 0.09 & 0.25 \\
\midrule
Average & 0.54 & 0.41 & 0.32 & 0.32 & 0.26 & 0.02 & 0.38 \\
\bottomrule
\end{tabular}
\end{table}

\begin{table}[H]
\centering
\caption{\textbf{Per-benchmark $r$ for the ablations} of Table~\ref{tab:promotion-ablation}. \retfull{} is \OURMETHOD.}
\label{tab:appx-ablation-r}
\scriptsize
\setlength{\tabcolsep}{4pt}
\begin{tabular}{@{}lrrrrrrr@{}}
\toprule
Benchmark & \retfull{} & \retboth{} & \retsignal{} & \retnoaccept{} & \rettoken{} & \retnosignal{} & \ocabidirectional{} \\
\midrule
GSM8K & 1.26 & 1.26 & 1.26 & 1.26 & 1.26 & 1.27 & 1.49 \\
MATH-500 & 1.26 & 1.26 & 1.27 & 1.26 & 1.26 & 1.27 & 1.64 \\
MBPP & 1.27 & 1.27 & 1.28 & 1.27 & 1.25 & 1.29 & 1.76 \\
HumanEval & 1.24 & 1.25 & 1.25 & 1.25 & 1.25 & 1.25 & 1.50 \\
GPQA & 1.26 & 1.25 & 1.27 & 1.27 & 1.25 & 1.29 & 1.79 \\
IFEval & 1.26 & 1.26 & 1.26 & 1.26 & 1.26 & 1.26 & 1.51 \\
MMLU & 1.25 & 1.26 & 1.27 & 1.25 & 1.25 & 1.26 & 1.53 \\
LCB C++ & 1.29 & 1.28 & 1.29 & 1.29 & 1.27 & 1.28 & 2.14 \\
AIME 2024 & 1.25 & 1.25 & 1.27 & 1.26 & 1.26 & 1.27 & 1.71 \\
AIME 2025 & 1.27 & 1.28 & 1.26 & 1.28 & 1.26 & 1.30 & 1.94 \\
\midrule
Average & 1.26 & 1.26 & 1.27 & 1.27 & 1.26 & 1.27 & 1.70 \\
\bottomrule
\end{tabular}
\end{table}
\end{document}